\documentclass[10pt,letterpaper]{article}

\usepackage{times}
\usepackage{epsfig}
\usepackage{graphicx}
\usepackage{amsmath}
\usepackage{amssymb}
\usepackage[utf8]{inputenc}
\usepackage{algorithm}
\usepackage{amsthm}
\usepackage{algpseudocode} 
\usepackage[numbers]{natbib}
\usepackage{authblk}

\newcommand{\BEAS}{\begin{eqnarray*}}
\newcommand{\EEAS}{\end{eqnarray*}}
\newcommand{\BEA}{\begin{eqnarray}}
\newcommand{\EEA}{\end{eqnarray}}
\newcommand{\BEQ}{\begin{equation}}
\newcommand{\EEQ}{\end{equation}}
\newcommand{\BIT}{\begin{itemize}}
\newcommand{\EIT}{\end{itemize}}
\newcommand{\BNUM}{\begin{enumerate}}
\newcommand{\ENUM}{\end{enumerate}}
\newcommand{\BA}{\begin{array}}
\newcommand{\EA}{\end{array}}

\setlength{\bibsep}{0.6pt}

\DeclareMathAlphabet{\mathpzc}{OT1}{pzc}%
                                 {m}{it}
\usepackage{natbib}



%
%

%
\DeclareMathOperator{\argmax}{argmax}

\DeclareMathOperator{\Tr}{Tr}
\DeclareMathOperator{\tr}{Tr}

\DeclareMathOperator{\Diag}{Diag}

\newcommand{\croch}[1]{\left[ #1 \right] }
\newcommand{\set}[1]{\left\{ #1 \right\} }

\newtheorem{theorem}{Theorem}

%

\title{Large-Margin Metric Learning for Partitioning Problems}
\author[1,2]{R\'emi Lajugie\thanks{remi.lajugie@ens.fr}}
\author[1,2]{Sylvain Arlot\thanks{sylvain.arlot@ens.fr}}
\author[1,2]{Francis Bach\thanks{francis.bach@ens.fr}}

\affil[1]{D\'epartement d'Informatique, Ecole Normale Sup\'erieure, Paris, France}
\affil[2]{INRIA, Equipe projet SIERRA}


\begin{document}

\maketitle

\begin{abstract}

In this paper, we consider unsupervised partitioning problems, such as clustering, image segmentation, video segmentation and other change-point detection problems. We focus on partitioning problems based explicitly or implicitly on the minimization of Euclidean distortions, which include mean-based change-point detection, K-means, spectral clustering and normalized cuts. Our main goal is to learn a Mahalanobis metric for these unsupervised problems, leading to feature weighting and/or selection. This is done in a supervised way by assuming the availability of several potentially partially labelled datasets that share the same metric. We cast  the metric learning problem as a large-margin structured prediction problem, with proper definition of regularizers and losses, leading to a convex optimization problem which can be solved efficiently with iterative techniques.
We provide experiments where we show how learning the metric may significantly improve the partitioning performance in synthetic examples, bioinformatics, video segmentation and image segmentation problems.
 

\end{abstract}

\section{Introduction} 
\label{Intro}

Unsupervised partitioning problems are ubiquitous in machine learning and other data-oriented fields such as computer vision, bioinformatics or signal processing. They include (a) traditional \emph{unsupervised clustering} problems, with the classical K-means algorithm, hierarchical linkage methods~\cite{gower1969minimum} and spectral clustering~\cite{ng2002spectral}, (b) \emph{unsupervised image segmentation} problems where two neighboring pixels are encouraged to be in the same cluster, with mean-shift techniques~\cite{cheng1995mean} or normalized cuts~\cite{ShiMalik}, and (c) \emph{change-point detection} problems adapted to multivariate sequences (such as video) where segments are composed of contiguous elements, with typical window-based algorithms~\cite{desobry2005online} and various methods looking for a change in the mean of the features~(see, e.g.,~\cite{chen2011parametric}).

All the algorithms mentioned above rely on a specific distance (or more generally a similarity measure) on the space of configurations. A good metric is crucial to the performance of these partitioning algorithms and its choice is heavily problem-dependent. While the choice of such a metric has been originally tackled manually (often by trial and error), recent work has considered learning such metric directly from data. Without any supervision, the problem is ill-posed and methods based on generative  models may learn a metric or reduce dimensionality (see, e.g.,~\cite{de2006discriminative}), but typically with no  guarantees that they lead to better partitions. In this paper, we follow~\cite{bar2006learning,Xing,BachJo} and consider the goal of learning a metric for potentially several partitioning problems sharing the same metric, assuming that several fully or partially labelled partitioned datasets are available during the learning phase. While such labelled datasets are typically expensive to produce, there are several scenarios where these datasets have already been built, often for evaluation purposes. These occur in video segmentation tasks (see Section~\ref{sec:video}), image segmentation tasks (see Section~\ref{sec:image}) as well as change-point detection tasks in bioinformatics (see  \cite{Toby} and Section~\ref{sec.appli.change-distrib}).

In this paper, we consider partitioning problems based explicitly or implicitly on the minimization of Euclidean distortions, which include K-means, spectral clustering and normalized cuts, and mean-based change-point detection. We make the following contributions:

\begin{list}{\labelitemi}{\leftmargin=1.1em}
   \addtolength{\itemsep}{-.0\baselineskip}
\item[--] We review and unify several partitioning algorithms in Section~\ref{sec:partitions}, and cast them as the maximization of a linear function of a rescaled equivalence matrix, which can be solved by algorithms based on spectral relaxations or dynamic programming. 
\item[--] Given fully labelled datasets, we cast in Section~\ref{sec:structure} the metric learning problem as a large-margin structured prediction problem, with proper definition of regularizers, losses and efficient loss-augmented inference.

\item[--] Given partially labelled datasets, we propose in Section~\ref{sec:extensions} an  algorithm, iterating between labelling the full datasets given a metric and learning a metric given the fully labelled datasets. We also consider in Section~\ref{sec:dist} extensions that allow changes in the full distribution of univariate time series (rather than changes only in the mean), with application to bioinformatics.

\item[--] We provide in Section~\ref{sec:experiments} experiments where we show how learning the metric may significanty improve the partitioning performance in synthetic examples, video segmentation  and image segmentation problems.
\end{list}

\vspace*{-.25cm}

\paragraph{Related work.} \hspace*{.000cm}

The need for metric learning goes far beyond unsupervised partitionning problems. \cite{Weinberger} proposed a large margin framework for learning a metric in nearest-neighbours algorithms based on sets of must-link/must not link constraints, while \cite{goldberger2004neighbourhood} considers a probability-based non-convex formulation. For these works, a single dataset is fully labelled and the goal is to learn a metric leading to good testing performance on unseen data.

Some recent work \cite{JainKulis} proved links between metric learning and kernel learning, permitting to kernelize any Mahalanobis distance learning problem.

Metric learning has also been considered in semi-supervised clustering of a single dataset, where some partial constraints are given. This includes the works of~\cite{bar2006learning,Xing}, both based on efficient convex formulations. As shown in Section~\ref{sec:experiments}, these can be used in our settings as well by stacking several datasets into a single one. However, our discriminative large-margin approach outperforms these.

Moreover, the task of learning how to partition was tackled in~\cite{BachJo} for spectral clustering. The problem set-up is the same (availability of several fully partitioned datasets), however, the formulation is non-convex and relies on the unstable optimization of eigenvectors. In Section~\ref{sec:nc}, we propose a convex more stable large-margin approach. 

Other approaches do not require any supervision \cite{de2006discriminative}, and perform dimensionality reduction and clustering at the same time, by iteratively alternating the computation of a low-rank matrix and a clustering of the data using the corresponding metric. However, they are unable to take advantage of the labelled information that we use.  

Our approach can also be related to the one of \cite{szummer2008learning}. Given a small set of labelled instances, they use a similar large-margin framework, inspired by \cite{Tsochan} to learn parameters of Markov random fields, using graph cuts for solving the ``loss-augmented inference problem'' of structured prediction. However, their segmentation framework does not apply to unsupervised segmentation (which is the goal of this paper). In this paper, we present a supervised learning framework aiming at learning how to perform an unsupervised task.

Our approach to learn the metric is nevertheless slightly different of the ones mentioned above. Indeed, we cast this problem as the solution of a structured SVM as in \cite{Tsochan, Taskar}. This make our paper shares many conceptual steps with works like \cite{Caetano:Graph,McFeeGert} where they use a structured SVM to learn in one case weights for graph matchings and a metric for ranking in the other case.

\section{Partitioning through matrix factorization}
\label{sec:partitions}

In this section, we   consider $T$ multi-dimensional observations $x_1,\dots,x_T \in \mathbb{R}^P$, which may be represented in a matrix $X \in \mathbb{R}^{T \times P}$. Partitioning the $T$ observations into $K$ classes is equivalent to finding an \emph{assignment matrix}  $Y \in \{0,1\}^{T \times K}$, such that  $Y_{ij}=1$ if the $i$-th observation is affected to cluster $j$ and $0$ otherwise. For general partitioning problems, no additional constraints are used, but for change-point detection problems, it is assumed that the segments are contiguous and with increasing labels. That is, the matrix $Y$ is of the form
\[
Y =\begin{pmatrix}
\mathbf{1}_{T_1}    & 0 & \ldots & 0 \\
0 & \ddots & \ddots & \vdots \\
\vdots & \ddots  & \ddots & 0 \\
0 & \ldots & 0 & \mathbf{1}_{T_K}  \\
\end{pmatrix},
\]
where $\mathbf{1}_D \in \mathbb{R}^D$ is the $D$-dimensional vector with constant components equal to one, and $T_j$ is the number of elements in cluster $j$. For any partition, we may re-order (non uniquely) the data points so that the assignment matrix has the same form; this is typically useful for the understanding of partitioning problems.

\subsection{Distortion measure}
In this paper, we consider partitioning models where each data point in cluster $j$ is modelled by a vector (often called a centroid or a mean) $c_j \in \mathbb{R}^p$, the overall goal being to find a partition and a set of means so that the distortion measure $\sum_{i=1}^T \sum_{j=1}^K Y_{ij} \| x_i - c_j\|^2$ is as small as possible, where $\| \cdot \|$ is the Euclidean norm in $\mathbb{R}^P$. By considering the Frobenius norm defined through $\|A\|_F^2 = \sum_{i=1}^T \sum_{j=1}^P A_{ij}^2$, this is equivalent to minimizing
\begin{equation}
\label{eq:distortion}
\| X - Y C \|_F^2
\end{equation}
with respect to an assignment matrix $Y$  and the centroid matrix $C \in \mathbb{R}^{K \times P}$. 

\subsection{Representing partitions}
Following~\cite{BachJo,de2006discriminative}, the quadratic minimization problem in $Y$ can be solved in closed form, with solution $C = (Y^\top Y)^{-1} Y^\top X$ (it can be found by computing the matrix gradient and setting it to zero). Thus, the partitioning problem (with known number of clusters $K$) of minimizing the distortion in Eq.~(\ref{eq:distortion}), is equivalent to:
\begin{equation}
\label{eq:distortionM}
\min_{ Y \in \{0,1\}^{T \times K}, \ Y\mathbf{1}_K = \mathbf{1}_P }
\| X - Y (Y^\top Y)^{-1} Y^\top X \|_F^2 \enspace .
\end{equation}
Thus, the problem is naturally parameterized by the $T \times T$-matrix $M = Y (Y^\top Y)^{-1} Y^\top$. This matrix, which we refer to as a \emph{rescaled equivalence matrix}, has a specific structure. First the matrix $Y^\top Y$ is diagonal, with $i$-th diagonal element equal to the number of elements in the cluster containing the $i$-th data point. Thus $M_{ij} = 0$ if $i$ and $j$ are in different clusters and otherwise equal to $1/D$ where $D$ is the  number of elements in the cluster containing the $i$-th data point. Thus, if the points are re-ordered so that the segments are composed of contiguous elements, then we have the following form
\[
M =\begin{pmatrix}
\mathbf{1} \mathbf{1}^{\top} / T_1 & 0 & \ldots & 0 \\
0 & \ddots & \ddots & \vdots \\
\vdots & \ddots  & \ddots & 0 \\
0 & \ldots & 0 & \mathbf{1} \mathbf{1}^{\top} / T_K \\
\end{pmatrix}.
\]
In this paper, we  use this representation of partitions. Note the difference with alternative representations $YY^\top$ which has values in $\{0,1\}$, used in particular by~\cite{JouBacPonc10_cvpr}.

We  denote by $\mathcal{M}_K$ the set of rescaled equivalence matrices, i.e., matrices $M \in \mathbb{R}^{T \times T}$ such that there exists an assignment matrix $Y \in \mathbb{R}^{ T \times K}$ such that $M = Y (Y^\top Y)^{-1} Y^\top$. For situations where the number of clusters is unspecified, 
we denote by $\mathcal{M}$ the union of all $\mathcal{M}_K$ for $K \in \{1,\dots,N\}$.

Note that the number of clusters may be obtained from the trace of $M$, since $\tr M = \tr Y (Y^\top Y)^{-1} Y^\top = 
\tr  (Y^\top Y)^{-1} Y^\top Y  = K$. This can also be seen by noticing that 
$M^2 = Y (Y^\top Y)^{-1} Y^\top Y (Y^\top Y)^{-1} Y^\top = M $, i.e., $M$ is a projection matrix, with eigenvalues in $\{0,1\}$, and the number of eigenvalues equal to one is exactly the number of clusters.
 Thus, 
$
\mathcal{M}_K = \big\{ M \in \mathcal{M}, \ \tr M = K \big\}.
$

\vspace*{-.25cm}

\paragraph{Learning the number of clusters $K$.} \hspace*{.000cm}
Given the number of clusters $K$, we have seen from Eq.~\eqref{eq:distortionM} that the partitioning problem is equivalent to
\begin{equation}
\label{eq:partition}
\min_{M \in \mathcal{M}_K} \| X - M X \|_F^2 = \min_{M \in \mathcal{M}_K}  \tr \big[ XX^\top  ( I - M)  \big].
\end{equation}
In change-point detection problems, an extra constraint of contiguity of segments is added.

In the common situation when the number of clusters~$K$ is unknown, then it may be estimated directly from data by penalizing the distortion measure by a term proportional to the number of clusters, as usually done for instance in change-point detection~\cite{Lav:2005}. 
This is a classical idea that can be traced back to the AIC criterion~\cite{Akaike} for instance. 
Given that the number of clusters for a rescaled equivalence matrix $M$ is $\tr M$, this leads to the following formulation:
\begin{equation}
\label{eq:partition2}
 \min_{M \in \mathcal{M}} \tr \big[ XX^\top  ( I - M)  \big] + \lambda \tr M
\end{equation}
Note that our metric learning algorithm also learns this extra parameter $\lambda$.

Thus,  the two types of partitioning problems (with fixed or unknown number of clusters) can be cast as the problem of maximizing a linear function of the form $\tr ( AM)$ with respect to $M \in \mathcal{M}$, with the potential constraint that $\tr M = K$. In general, such optimization problems may not be solved in polynomial time. In Section~\ref{sec:cp}, we show how adding contiguity constraints makes it possible to obtain a solution in polynomial time through dynamic programming. For general situations, the $K$-means algorithm, although not exact, can be used to get good partitioning in polynomial time. In Section~\ref{sec:spectral}, we provide a spectral relaxation, which we  use within our large-margin framework in Section~\ref{sec:structure}.

\subsection{Change-point detection by dynamic programming}
\label{sec:cp}
\label{sec:dp}
The change-point detection problem is a restriction of the general partitioning problem where the segments are composed of contiguous elements. We denote by $\mathcal{M}^{\rm seq}$ the set of partition matrices for the change-point detection problem, and $\mathcal{M}_K^{\rm seq}$, its restriction to partitions with $K$ segments.

The problem is thus of solving  Eq.~(\ref{eq:partition2}) (known number of clusters)
or Eq.~(\ref{eq:partition})  (unknown number of clusters) with the extra constraint that
$M \in \mathcal{M}^{\rm seq}$. 
In these two situations, the contiguity constraint leads to \emph{exact} polynomial-time algorithms based on dynamic programming. See, e.g.,~\cite{Rigaill}. This leads to algorithms for maximizing $\tr (AM)$, when $A$ is positive semi-definite in  $O(T^2)$. When the number of segments $K$ is known the running time complexity is  $O(KT^2)$. 

We now describe a reformulation that can solve $\max_{M \in \mathcal{M}}\Tr(A M)$ for any matrix $A$ (potentially with negative eigenvalues, as from Eq.~(\ref{eq:partition2})).
This algorithm is presented in Algorithm~\ref{alg3}. 
It only requires some preprocessing of the input matrix $A$, namely computing its summed area table $I$  (or image integral), defined to have the same size as $A$ and  with $I_{ij}=\sum_{i' \leq i, \ j' \leq j}A_{i'j'}$. In words it is the sum of the elements of $A$ which are above and to the left of respectively $i$ and $j$.
A similar algorithm can be derived in the case where $M \in \mathcal{M}_K$.
\begin{algorithm}[h]   
\caption{Dynamic programming for maximizing $\Tr(AM)$ such that $M\in \mathcal{M} $}          
\label{alg3}                           
\begin{algorithmic}                    
    \Require $T \times T$  matrix $A$
    \State Compute $I$, image integral (summed area table) of  $A$
    \State  Initialize $C(1,:)= $diag$(I)$

    \For{$t=1:T-1$}
    	
		\State      $C(t+1,t+1)= \max (C(1:t,t))+I(t+1,t+1)$
		\For{u=t+1 \ldots T}     
		\State $\beta=\frac{ I(s,s)+I(t+1,t+1)-I(s,t+1)-I(t+1,s)}{(u-t)}$
           		\State $C(t+1,u)= \max(C(1:t,t))+\beta$
  		\EndFor
 
    \EndFor \\
    Backtracking steps:    $t_c =T$, $Y=\emptyset$
        \While {$t_c \geqslant 1$}
 					\State $t_{c}^{\rm old}=t_c$,  ${t_c}= \argmax \set{ C({t_c},:) }$
            		\State $s=t_c^{\rm old}-t_{c}+1$, $Y=\begin{pmatrix}
        		Y & 0 \\
        		0 & \bold{1}_s
        \end{pmatrix}$  
        
        \vspace*{-.25cm}

			\State     
        \EndWhile \\
        
\Return Matrix $M=Y(Y^{\top} Y)^{-1} Y^{\top}$.
\end{algorithmic}
\end{algorithm}

\subsection{K-means clustering and spectral relaxation}
\label{sec:spectral}
For a known number of clusters $K$, 
K-means  is an iterative algorithm aiming at minimizing the distortion measure in Eq.~(\ref{eq:distortion}): it iterates between (a) optimizing with respect to $C$, i.e., $C = (Y^\top Y)^{-1} Y^\top X$, and (b) minimizing with respect to $Y$ (by assigning points to the closest centroids). Note that this algorithm only converges to a local minimum and there is no known algorithm to perform an exact decoding in polynomial time in high dimensions $P$. Moreover, the K-means algorithm cannot be readily applied to approximately maximize any linear function $\tr AM$ with respect to $M \in \mathcal{M}$, i.e., when $A$ is not positive-definite or the number of clusters is not known.

Following~\cite{ShiMalik,ng2002spectral,BachJo}, we now present a spectral relaxation of this problem. This is done by relaxing the set $\mathcal{M}$ to the set of matrices that satisfy $M^2=M$ (i.e., removing the constraint that $M$ takes a finite number of distinct values). 
When the number of clusters is known, this leads to the classical spectral relaxation, i.e., 
$$\max_{M \in \mathcal{M}, \ \tr M = K} \tr (AM) \leqslant \max_{M^2 = M, \ \tr M = K } \tr(AM) ,$$ 
which is equal to the sum of the $K$ largest eigenvalues of~$A$; the optimal matrix $M$ of the spectral relaxation is the orthogonal projector on the eigenvectors of $A$ with $K$ largest eigenvalues.

When the number of clusters is unknown, we have:
$$\max_{M \in \mathcal{M}} \tr (AM) \leqslant \max_{M^2 = M } \tr(AM) =
\tr (A)_+,$$ 
where $\tr(A)_+$ is the sum of positive eigenvalues of $A$. The optimal matrix $M$ of the spectral relaxation is the orthogonal projector on the eigenvectors of $A$ with positive eigenvalues. Note that in the formulation from Eq.~(\ref{eq:partition2}), this corresponds to thresholding all eigenvalues of $XX^\top$ which are less than $\lambda$.

We denote by $\mathcal{M}^{\rm spec} = \{ M \in \mathbb{R}^{P \times P} , \ M^2 = M\}$ and 
$\mathcal{M}^{\rm spec}_K = \{ M \in \mathbb{R}^{P \times P} , \ M^2 = M, \ \tr M = K \}$ the relaxed set of rescaled equivalence matrices.

\subsection{Metric learning}

In this paper, we consider   learning   a \emph{Mahalanobis metric}, which may be parameterized by a positive definite matrix $B \in \mathbb{R}^{P \times P}$. This corresponds to replacing dot-products $x_i^\top x_j$ by $x_i^\top B x_j$, and  $XX^\top$ by $X BX^\top$. Thus, when the number of cluster is known, this corresponds to  
\BEQ
\label{eq:BB}
\min_{M \in \mathcal{M}_K} \tr \big[ X B X^\top  ( I - M)  \big]    
\EEQ
 or, when the number of clusters is unknown, to:
\BEQ
\label{eq:AA}
 \min_{M \in \mathcal{M}}  \tr \big[  B X^\top  ( I - M) X \big] + \lambda \tr M. 
 \EEQ
Note that by replacing $B$ by $B \lambda$ and dividing the equation by $\lambda$, we may use an equivalent formulation of Eq.~(\ref{eq:AA}) with $\lambda=1$, that is:
\begin{equation}\label{eq:metlearn}
 \min_{M \in \mathcal{M}}  \tr \big[  XB X^\top  ( I - M)  \big] +   \tr M. 
\end{equation}
The key aspect of the partitioning problem is that it is formulated as optimizing with respect to $M$ a function \emph{linearly} parameterized by $B$. The linear parametrization in $M$ will be useful when defining proper losses and efficient loss-augmented inference in Section~\ref{sec:structure}.

Note that we may allow $B$ to be just positive semi-definite. In that case, the zero-eigenvalues of the pseudo-metric corresponds to irrelevant dimensions. That means in particular we have performed dimensionality reduction on the input data. We propose a simple way to encourage this desirable property in Section~\ref{Low-rank}.

\section{Loss between partitions} \label{perte}

Before going further and apply the framework of Structured prediction \cite{Tsochan} in the context of metric learning, we need to find a loss on the output space of possible partitioning which is well suited to our context. To avoid any notation conflict, we will refer in that section to $\mathcal{P}$ as a general set of partition (it can corresponds for instance to $\mathcal{M}^{\rm seq}$).
\subsection{Some standard loss}\label{sec:losses}

\paragraph{The Rand index}
When comparing partitions \cite{partitions}, a standard way to measure how different two of them are is to use the Rand \cite{Rand} index which is defined, for two partitions of the same set of $T$ elements $S$ $P_1=\lbrace P^1_1,\ldots, P^{K_1}_1 \rbrace$ and $P_2\lbrace P^1_2,\ldots, P^{K_2}_2 \rbrace$ as the sum of concordant pairs over the number of possible pairs. More precisely, if we consider all the possible pairs of elements of $S$, the concordant pairs are defined as the sum of the pairs of elements which both belong to the same set in $P_1$ and $P_2$ and of the pairs which are not in the same set both in $P_1$ and $P_2$. In matricial terms, it is linked to the Frobenius distance between the equivalence matrices representing $P_1$ and $P_2$ (these matrices are binary matrices of size $T \times T$ which are 1 if and only if the element $i$ and the element $j$ belong to the same set of the partition).\\
This loss is not necessarily very well suited to our problem, since intuitively one can see that it doesn't take into account the size of each subset inside the partition, whereas our concern is to optimize intra class variance which is a rescaled indicator.

\paragraph{Hausdorff distance}
In the change-point detection litterature, a very common way to measure dissimilarities between partitions is the so-called Hausdorff distance \cite{Boysen} on the elements of the frontier of the elements of the partitions (the need for a frontier makes it inapplicable directly to the case of general clustering). 
Let's consider two partitions of a finite set $S$ of T elements. We assume that the elements have a sequential order and thus elements of partitions $P_1$ and $P_2$ have to be contiguous.
It is then possible to define the frontier (or set of ruptures) of $P_1$ as the collection of indexes $\partial P_1=\lbrace
\inf P^2_1,\ldots,\inf P_1^K\rbrace$. Then, by embedding the set $S$ into $[0,1]$ (it corresponds just to normalize the time indexes so that they are in $[0,1]$), we can consider a distance $d$ on $[0,1]$, (typically the absolute value) and then define the associated Hausdorff distance $d_H(P_1,P_2)=\max \lbrace \sup_{x \in \partial P_1} \inf_{y \in \partial P_2} d(x,y), \sup_{y \in \partial P_2} \inf_{x \in \partial P_1} d(x,y) \rbrace$

\paragraph{The loss considered in our context}
In this paper, we consider the following loss, which was originated proposed in a slightly different form by \cite{partitions} and has then been widely used in the field of clustering \cite{BachJo}.
This loss is a variation of the $\chi^2$ association in a $K_1 \times K_2$ contingency table (see \cite{partitions}). More precisely, if we consider the contingency table associated to $P_1$ (partition of a set of size $T$) with $K_1$ elements and $P_2$ with $K_2$ elements (the contingency table being the $K_1\times K_2$ table $C$ such that $C_{i,j}=n_{ij}$ the number of elements in element $i$ of $P_1$ and in element $j$ of $P_2$), we have that $\|M-N\|^2_F=K_1+K_2-\frac{\chi^2(C)+T}{T}$.
\BEQ
 \label{losses}
\!\!\!\!\!\! \ell (M,N) = \frac{1}{T}\| M \!-\! N \|_F^2 
= \frac{1}{T}\big(\Tr(M) + \Tr(N) - 2\Tr(MN )\big).
\EEQ
Moreover, if the partitions encoded by $M$ and $N$ have clusters
 $P^1_1,\dots,P^{K_1}_1$ and  $P^1_2,\dots,P^{K_2}_2$, then 
$
T\ell (M,N)  = K_1+K_2-2 \sum_{k,l}\frac{|A_k \cap B_l|^2}{|A_k| \cdot |B_l|}.
$
This loss is equal to zero if the partitions are equal, and always less than $\frac{1}{T}(K+L-2)$.
Another equivalent interpretation of this index is given by, with the usual convention that for the element of $S$ indexed by $i$ $ P_1(i)$ is the subset of $P_1$ where $i$ belongs:
$$
T\ell (M,N)  = K_1+K_2-2 \sum^{T}_{i=1}\frac{\vert P_1(i) \cap P_2(i)\vert}{|P_1(i)| \times |P_2(i)|}.
$$

This index seems intuitively much more suited to the study of the problem of variance minimization since it involves the rescaled equivalence matrices which parametrize naturally these kind of problems.
We examine in the Appendix more facts about these losses and their links, especially about the asymptotic behaviour of the loss we use in the paper. We also show a link between this loss and the Hausdorff in the case of change-point detection.

\section{Structured prediction for metric learning}
\label{sec:structure}

As shown in the previous section, our goal is to learn a positive definite matrix $B$, in order to improve the performance of structured output algorithm that minimizes with respect to $M \in \mathcal{M}$, the following cost function
of Eq. \ref{eq:metlearn}.
Using the change of variable described in the table below, the partitioning problem may be cast as 
\[ 
\max_{M \in \mathcal{M}}~\langle w, \varphi(X,M) \rangle
\mbox{ or } \max_{M \in \mathcal{M}_K}~\langle w, \varphi(X,M) \rangle. \]
where $\langle A, B \rangle$ is the Frobenius dot product.
\renewcommand{\arraystretch}{1.12}

\vspace*{-.2cm}

\begin{center}
\label{variants}
\begin{tabular}{|c|c|c|}

\hline 
Number of clusters & $\varphi(X,M)$ & $w$
\\ \hline 
 Known & $X^{\top} M X$ & $B$        \\
 $(\tr M = K)$ & & 
\\ \hline 
Unknown & $\frac{1}{T}
\begin{pmatrix}
\ X^{\top} M X & 0 \\
0 & M
\end{pmatrix}$  & $\begin{pmatrix}
B & 0 \\
0 & -I 
\end{pmatrix}
$    \\ \hline 
\end{tabular}
\end{center}

We denote by $\mathcal{F}$ the vector space where the vector $w$ defined above belongs to.
Our goal is thus  to estimate $w \in \mathcal{F} $ from $N$ pairs of observations $(X_i,M_i) \in \mathcal{X} \times \mathcal{M}$. This is exactly the goal of large-margin structured prediction~\cite{Tsochan}, which we now present. We   denote by $\mathcal{N}$ a generic set of matrices, which may either be $\mathcal{M}$, $\mathcal{M}^{\rm spec}$, $\mathcal{M}^{\rm seq}$, $\mathcal{M}_K$, $\mathcal{M}_K^{\rm spec}$, $\mathcal{M}_K^{\rm seq}$, depending on the situation (see Section~\ref{sec:N} for specific cases).

\subsection{Large-margin structured output learning}
In the margin-rescaling framework of~\cite{Tsochan}, using a certain loss $\ell: \mathcal{N} \times \mathcal{N} \to \mathbb{R}_+$ between elements of $\mathcal{N}$ (here partitions), the goal is to minimize with respect to $ w \in \mathcal{F}$,

\vspace*{-.25cm}

$$\frac{1}{N} \sum_{i=1}^N \ell \big(  \argmax_{M \in \mathcal{N}} \langle w, \varphi(X_i,M) \rangle , M_i \big) + \Omega(w),
$$
where $\Omega$ is any (typically convex) regularizer. This framework is standard in machine learning in general and metric learning in particular (see e.g, \cite{JainKulis}).
This loss function $w \mapsto \ell \big(  \argmax_{M \in \mathcal{N}} \langle w, \varphi(X_i,M) \rangle , M_i \big)$ is not convex in $M$, and may be replaced by the convex surrogate

\vspace*{-.4cm}

$$L_i(w) =  \max_{M \in \mathcal{N}}  \big\{
\ell(M,M_i) + \langle w, \varphi(X_i,M)  -  \varphi(X_i,M_i) \rangle \big\},
$$
leading to the minimization of
\begin{equation}\label{eq:objective}
\frac{1}{N} \sum_{i=1}^N L_i(w) + \Omega(w).
\end{equation}
In order to apply this framework, several elements are needed: (a) a regularizer $\Omega$, (b) a loss function $\ell$,  and (c) the associated efficient algorithms for computing~$L_i$, i.e., solving the \emph{loss-augmented inference} problem 
$\max_{M \in \mathcal{N}}  \big\{
\ell(M,M_i) + \langle w, \varphi(X_i,M)  -  \varphi(X_i,M_i) \rangle \big\}$.

As discussed in Section \ref{perte}, a natural loss on our output space is given by the Frobenius norm of the rescaled equivalence matrices associated to partitions.

%
%

\subsection{Loss-augmented inference problem}
\label{sec:N}
Efficient minimization is key to the applicability of large-margin structured prediction and this problem  is a classical computational bottleneck. In our situation the cardinality of $\mathcal{N}$ is exponential, but the choice of loss between partitions
lead to the problem $\max_{M \in \mathcal{N}}\Tr(A_iM)$ where:
\begin{list}{\labelitemi}{\leftmargin=1.1em}
   \addtolength{\itemsep}{-.3\baselineskip}
\item[--] $A_i = \frac{1}{T} (  X_i BX_i^{\top} - 2 M_i+{\rm Id})$ if the number of clusters is known.
\item[--] $A_i = \frac{1}{T} (  X_i BX_i^{\top} - 2 M_i )$ otherwise.
\end{list}
Thus, the loss-augmented problem may be performed for the change-point problems exactly (see
Section~\ref{sec:dp}) or through a spectral relaxation otherwise (see Section~\ref{sec:spectral}). Namely,
for change-point detection problems, $\mathcal{N}$ is either $\mathcal{M}^{\rm seq}$ or $\mathcal{M}^{\rm seq}_K$, while for general partitioning problems, it is 
either $\mathcal{M}^{\rm spec}$ or $\mathcal{M}^{\rm spec}_K$.

\subsection{Regularizer}

We may consider several parametrizations/regularizers for our positive semidefinite matrix $B$. We may classically (see e.g, \cite{JainKulis}) penalize $\tr B^2 = \| B \|_F^2$, which is the classical squared Euclidean norm.
However, two variants of our algorithm are often needed for practical problems.

\vspace*{-.25cm}

 \paragraph{Diagonal metric.} \hspace*{.000cm}
To limit the number of parameters, we may be interested in only reweighting the different dimensions of the input data, i.e., we can impose the metric to be diagonal, i.e, $B=\Diag(b)$ where $b \in \mathbb{R}^P$. Then, the constraint is $ b\geqslant 0$, and we may penalize by $ \|b \|_1 = \mathbf{1}_P^\top b $ or $\|b \|_2^2$, depending whether we want to promote zeros in $b$ (i.e., to do feature selection).

\vspace*{-.25cm}

\paragraph{Low-rank metric.}  \hspace*{.000cm}\label{Low-rank}
Another potentially desirable property is the interpretability of the obtained metric in terms of its eigenvectors. Ideally we want to have a pseudo-metric with a small rank. As it is classically done, we relaxed it into the sum of singular values. Here, since the matrix $B$ is symmetric positive definite, this is simply the trace $\tr(B)$.

\subsection{Optimization}\label{sec:Optimization}

In order to optimize the objective function of Eq. \eqref{eq:objective}, we can use several optimization techniques. This objective present the drawback of being non-smooth and thus the convergence speed that we can expect are not very fast.
\\
In the structured prediction litterature, the most common solvers are based on cutting-plane methods (see \cite{Tsochan}) which can be used in our case for small dimensional-problem (i.e., low $P$). Otherwise we use  a projected subgradient method, which leads to more numerous but cheaper iterations. 
Cutting plane and Bundle methods~\cite{bundle} shows the best speed performances when the dimension of the feature space of the data to partition is low, but were empirically outperformed by a subgradient in the very high dimensional setting.

\section{Extensions}
\label{sec:extensions}

We now present extensions which make our metric learning more generally applicable.

\subsection{Spectral clustering and normalized cuts}
\label{sec:nc}
\label{sec:ncuts}
Normalized cut segmentation is a graph-based formulation for clustering aiming at finding roughly balanced cuts in graphs~\cite{ShiMalik}. The input data $X$ is now replaced by a similarity matrix
$W \in \mathbb{R}^{T \times T}_+$ and, for a known number of clusters $K$, as shown by \cite{ng2002spectral,BachJo}, it is exactly equivalent to 
$$
\max_{M \in \mathcal{M}_K} \tr \croch{M   \widetilde{W} },
$$
 where $\widetilde{W} = \Diag( W \mathbf{1})^{-1/2} W \Diag( W \mathbf{1})^{-1/2}$ is the normalized similarity matrix.

\vspace*{-.25cm}

\paragraph{Parametrization of the similarity matrix $W$.} \hspace*{.000cm}
Typically, given data points $x_1,\dots,x_T \in \mathbb{R}^P$ (in image segmentation problem, these are often the concatenation of the positions in the image and local feature vectors), the similarity matrix is  computed as
\begin{equation}
\label{eq:W}
(W_B)_{ij} = \exp \big( - (x_i - x_j)^\top B (x_i - x_j)  \big),
\end{equation}
where $B$ is a positive semidefinite matrix. Learning the matrix $B$ is thus of key practical importance.

However, our formulation would lead to efficiently learning (as a convex optimization problem) parameters only for a linear parametrization of $\widetilde{W}$. While the linear combination is attractive computationally, we follow the experience from the supervised setting where learning linear combinations of kernels, while formulated as a convex problem, does not significantly improve on methods that learn the metric within a Gaussian kernel  with non-convex approaches~(see, e.g.,~\cite{gehler2009feature,marszalek2007learning}).

We thus stick to the parametrization of Eq.~(\ref{eq:W}). In order to make the problem simpler and more tractable, we consider spectral clustering directly with $W$ and not with its normalized version, 
i.e., our partitioning problem becomes
$$
\max_{M \in \mathcal{M}} \tr W M \mbox{ or } \max_{M \in \mathcal{M}_K} \tr W M.
$$
In order to solve the previous problem, the spectral relaxation outlined in Section~\ref{sec:spectral} may be used, and corresponds to computing the eigenvectors of $W$ (the first $K$ ones if $K$ is known, and the ones corresponding to eigenvalues greater than a certain threshold otherwise).

\vspace*{-.25cm}

\paragraph{Non-convex optimization.} \hspace*{.000cm}
In our structured output prediction formulation, the loss function for the $i$-th observation becomes (for the case where the number of clusters is known):
\BEAS
 & & \max_{M \in \mathcal{M}^{\rm spec}_K }  \big\{
\ell(M,M_i) +  \tr W_B ( M - M_i)    \big\} \\
&  = &  - \tr W_B M_i +  \max_{M \in \mathcal{M}^{\rm spec}_K }  \big\{
\ell(M,M_i) +  \tr W_B   M     \big\} .
\EEAS
It is not a convex function of $B$, however, it is a difference of a concave and a convex function, which can be dealt with using majorization-minimization algorithm~\cite{yuille2003concave}. The idea of this algorithm is simply to upper-bound
the concave part $ - \tr W_B M_i$
by its linear tangent. Then the problem becomes convex and can be optimized using one of the method proposed in Section~\ref{sec:Optimization}
 We then iterate the process, which is known to be converging to a stationary point.

\subsection{Partial labellings}
\label{sec:partial}
The large-margin convex optimization framework relies on fully labelled datasets, i.e., pairs $(X_i,M_i)$ where $X_i$ is a dataset and $M_i$ the corresponding rescaled equivalence matrix. In many situations however, only partial information is available. In these situations, starting from the PCA metric, we propose to iterate between (a) label all datasets using the current metric and respecting the constraints imposed by the partial labels and (b) learn the metric using Section~\ref{sec:structure} from the fully labelled datasets.
See an application in Section~\ref{sec:experiments2}.

\subsection{Detecting changes in distribution of temporal signals} \label{sec.appli.change-distrib}
\label{sec:dist}
In sequential problems, for now, we are just able to detect changes in the mean of the distribution of time series but not to detect change-points in the whole distribution (e.g., the mean may be constant but the variance piecewise constant).
Let us consider a temporal series $X$ in which some breakpoints occur in the distribution of the data.
From this single series, we   build several series permitting to detect these changes, by considering features built from $X$, in which the change of distribution   appears as a change in mean.
A naive way would be to consider the moments of the data $X, X^2, X^3,\dots,X^r$ but unfortunately as $r$ grows these moments explode. 
A way to prevent them from exploding is to use the robust Hermite moments \cite{robustat}. 
These moments are computed using the Hermite functions and permit to consider the $p$-dimensional series $H_1(X),H_2(X), \dots$, where $H_i(X)$ is the $i$-th Hermite function $H_i(x)=2\sqrt{2^i\pi i!}e^{-\frac{x^2}{2\sigma^2}}(-1)^i2^{i/2}e^{\frac{x^2}{2}}\frac{d^i}{dx^i} \big( e^{\frac{-x^2}{2}} \big)$.

\vspace*{-.25cm}

\paragraph{Bioinformatics application.} \hspace*{.000cm}
Detection of change-points in DNA sequences for cancer prognosis
provides a natural testbed for this approach. 
Indeed, in this field, researchers face data which are linked to the number of copies of each gene along the DNA (a-CGH data as used in \cite{Toby}). 
The presence of such changes are generally related to the development of certain types of cancers.
On the data from the Neuroblastoma dataset~\cite{Toby}, some caryotypes with changes of distribution were manually annotated. Without any metric learning, the global error rate in change-point identification is 12\%. By considering the first 5 Hermite moments and  learning a metric, we reach a rate of 6.9\%, thus improving significantly the performance.

\section{Experiments}
\label{sec:experiments}

We have conducted a series of experiments showing improvements of our large-margin metric learning methods over previous metric learning techniques.

\subsection{Change point detection}\label{sec:experiments2}

\paragraph{Synthetic examples and robustness to lack of information.} \hspace*{.000cm}
 We consider $300$-dimensional time series of length $T=600$ with an unknown number of breakpoints. 
Among these series only 10 are relevant to the problem of change-point detection, i.e.,  290 series have abrupt changes which should be discarded. Since the identity of the 10 relevant time series is unknown,  by learning a metric we hope to obtain high weights on the relevant series and small weights on the others. 
The number of segments is not assumed to be known and is learned automatically.

Moreover, in this experiment we progressively remove information, in the sense that as input of the algorithm we only give a fraction of the original time series (and we measure the amount of information given through the ratio of the given temporal series compared to the original one). Results are presented in
Figure~\ref{fig:robustness}. As expected, the performance without metric learning is bad, while it is improved with PCA.
Techniques such as RCA~\cite{bar2006learning} which use the labels improve even more (all datasets were stacked into a single one with the corresponding supervision); however, it is not directly adapted to change-point detection, it requirse dimensionality reduction to work and the performance is not robust to the choice of the number of dimensions. Note also that all methods except ours are given the exact number of change-points.
Our large-margin approach outperforms the other metric, in the convex setting (i.e., extreme right of the curves), but also in partially-supervised setting where we use the alternative approach describe in Section~\ref{sec:partial}.

\begin{figure}

\vspace*{-.5cm}

\begin{center}
\includegraphics[width= .9\linewidth]{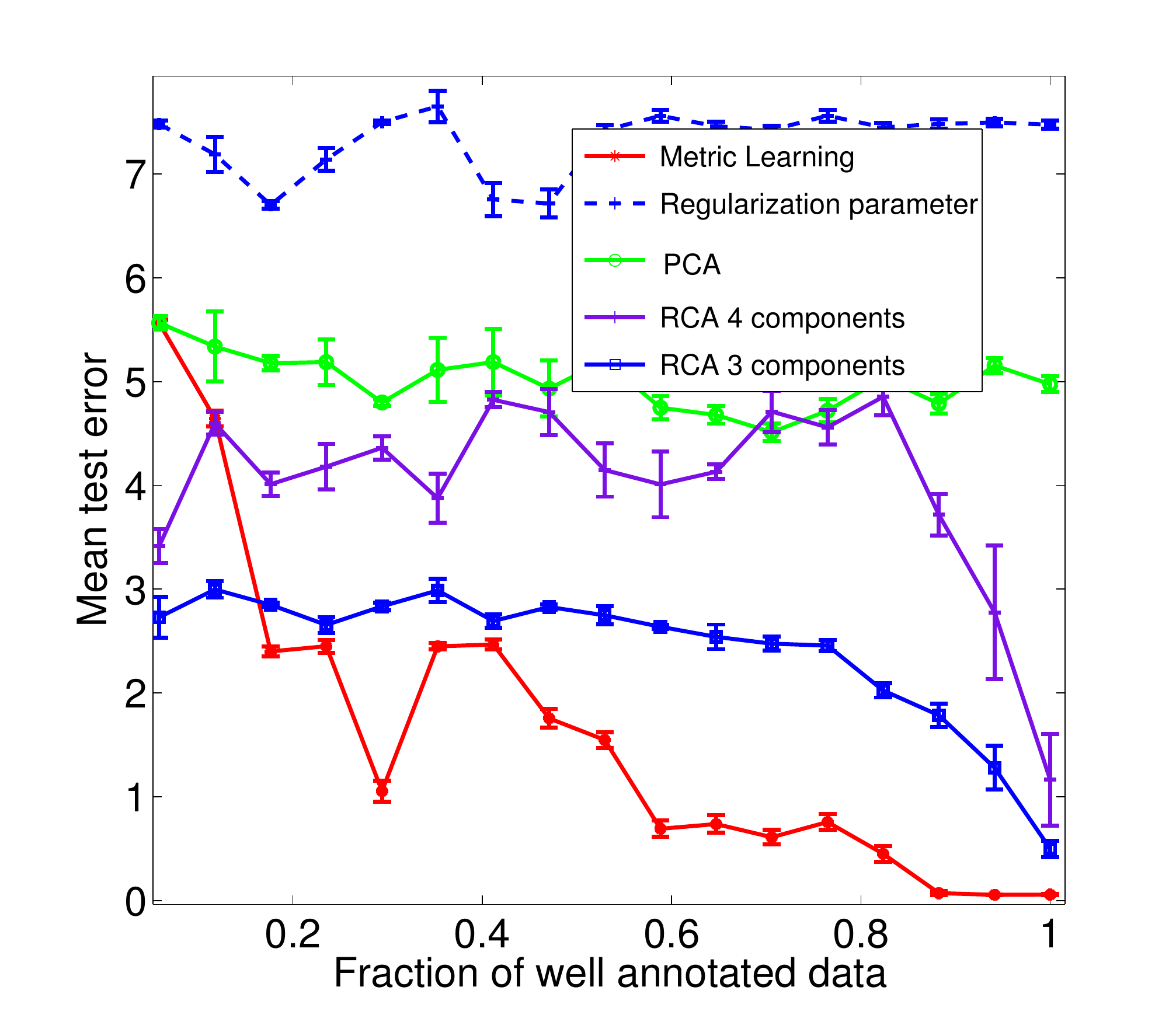}
\end{center}

\vspace*{-.45cm}

\caption{Performances on synthetic data vs.~the quantity of information available in the time series. Note the small error bars. We compare ourselves against a metric learned by RCA (with 3 or 4 components), an exhaustive search for one regularization parameter, and PCA. \label{fig:robustness}}
\end{figure}

\vspace*{-.25cm}

\paragraph{Video segmentation.}  \hspace*{.000cm}
\label{sec:video}
We applied our method to data coming from old TV shows (the length of the time series in that case is about 5400, with 60 to 120 change-points) where some speaking passages alternate with singing ones.  The videos are  from 1h up to 1h30 long.
We aim at recovering the segmentation induced by the speaking parts and the musical ones. Following ~\cite{Sylvian}, we use GIST features for the video part and MFCC features for the audio. The features were aggregated every second so that the temporal series we are considering are  about several thousands vectors long, which is still computationally tractable using the dynamic programming of Algorithm \ref{alg3}. 
 We used 4 shows for train, 3 for validation, 3 for test. The running times of our Matlab implementation were  in order of a few hours.
 
The results are described in Table~\ref{tableCPD}.   
  We consider three different settings: using only the image stream, only the audio stream or both. In these three cases, we consider using the existing metric (no learning), PCA, or our approach. In all settings,  metric learning improves performance. Note that the performance is  best with only the audio stream and our metric learning, given both streams, manages to do almost as well as with only the audio stream, thus illustrating the robustness of using metric learning in this context.

\begin{table}   
   \caption{\label{tableCPD} Empirical performance on each of the three TV shows used for testing. Each subcolumn stands for a different TV show. The smaller the loss is, the better the segmentation is. }

\vspace*{-.5cm} 

\begin{center}
\resizebox{\linewidth}{!}{
   \begin{tabular}{ |l | c |c|c| c|c|c | c|c|c| c}

     \hline
     Method &\multicolumn{3}{|c|}{Audio}
  & \multicolumn{3}{|c|}{Video} & \multicolumn{3}{|c|}{Both} \\ \hline
     PCA  & 23 & 41 & 34  & 40 & 55 & 25 & 29 & 53 & 37 \\ \hline
     Reg. parameter & 29 & 48 & 33 & 59 & 55 & 47 & 40 & 48&36\\ \hline
     Metric learning & $\bold{6.1}$ & $\bold{9.3}$ &$\bold{7}$ & $\bold{10}$ & $\bold{14 }$& $\bold{11}$ & $\bold{8.7}$& $\bold{9.6}$&$\bold{7.8}$\\ 

     \hline
     
   \end{tabular}}
  \end{center}

\vspace*{-.25cm}

\end{table}

\subsection{$K$-means clustering}

Using  the partition induced by the classes as ground truth, we tested our algorithm on some classification datasets from the UCI machine learning repository, using the classification information 
as partitions, following the methodology proposed by \cite{Xing}. This application of our framework is a little extreme in the sense that we assume only one partitioning as training point (i.e., $N=1$). The results are presented in Table \ref{clusteringtable}. For the ``Letters'' and ``Mov. Libras'' datasets, there are no significant differences, while for the ``Wine'' dataset, RCA is the best, and for the ``Iris'' dataset, our large-margin approach is best: even in this extreme case, we are competitive with existing techniques.

\begin{table}  
    \caption{Performance of the metric learning versus the Euclidean distance, and other metric learning algorithms such as RCA or \cite{Xing}. \label{clusteringtable} We use the loss from Eq.~\eqref{losses}.}

\vspace*{-.5cm} 
\begin{center}

\resizebox{\linewidth}{!}{
      \begin{tabular}{ |c|r@{ $\pm$ }l|r@{ $\pm$ }l|r@{ $\pm$ }l| r@{ $\pm$ }l| }
     \hline
     Dataset & \multicolumn{2}{|c|}{Ours} & \multicolumn{2}{c|}{Euclidean} & \multicolumn{2}{c|}{RCA} & \multicolumn{2}{c|}{\cite{Xing}} \\ \hline
	 Iris & $\bold{0.18}$ & $0.01$ & 0.55 & $10^{-11}$ &0.43 &0.02 & 0.30 &0.01  \\ \hline
	 Wine & 1.03 & $0.04$ &3.4 & $3.10^{-4}$ &$\bold{0.88}$ & 0.14 & 3.08&0.1\\ \hline
	 Letters & 34.5 & $0.1$ & 41.62 & $0.2$ &34.8 &0.5 &35.26 &$0.1$\\ \hline
     Mov. Libras & 14 & $1$  & 15 & $0.2$  & 22 & 2 & 15.07& 1\\ \hline
   \end{tabular}
   }
 \end{center}
\vspace*{-.25cm}

\end{table}
 
\subsection{Image Segmentation}
\label{sec:image}
We now consider learning metrics for normalized cuts and consider the Weizmann horses database \cite{Borenstein}, for which groundtruth segmentation is available. Using color and position features, we learn a metric with the method presented in Section~\ref{sec:ncuts} on 10 fully labelled images. We then test on  the remaining 318 images.

We compare the results of this procedure to a cross-validation approach with an exhaustive search on a 2D grid adjusting one parameter for the position features and one other for color ones.
The loss between groundtruth and segmentations obtained by the normalized cuts algorithm is measured either by Eq.~\eqref{losses} or the Jaccard distance. 
Results are summarized in Table~\ref{tablechwal}, with some visual examples in Figure~\ref{chwalbeaux}.
The metric learning within the Gaussian kernel significantly improves performance.
The running times of our pure Matlab implementation were in order of several hours to get convergence of the convex-concave procedure we used.

\begin{table}

\caption{\label{tablechwal} Performance of the metric learned in the context of image segmentation, comparing the result of a learned metric vs. the results of an exhaustive grid search (Grid). $\sigma$ is the standard deviation of the difference between the loss with our metric and the grid search. To assess the significance of our results, we perform t-tests whose p-values are respectively $2. 10^{-9}$ and $4.10^{-9}$.}

\vspace*{-.25cm}

\begin{center}
\resizebox{0.8\linewidth}{!}{

\begin{tabular}{|c|c|c|c|}
\cline{1-4}
 Loss used & Learned metric & Grid & $\sigma$ \\
 \cline{1-4}
 Loss of  Eq.~\eqref{losses} & 1.54 & 1.77 & 0.3\\ 
\cline{1-4}
  Jaccard distance &0.45 & 0.53 & 0.11\\ 
    \cline{1-4}

\end{tabular}}
\end{center}

\vspace*{-.5cm}


\end{table}
\begin{figure}
\includegraphics[width=\linewidth]{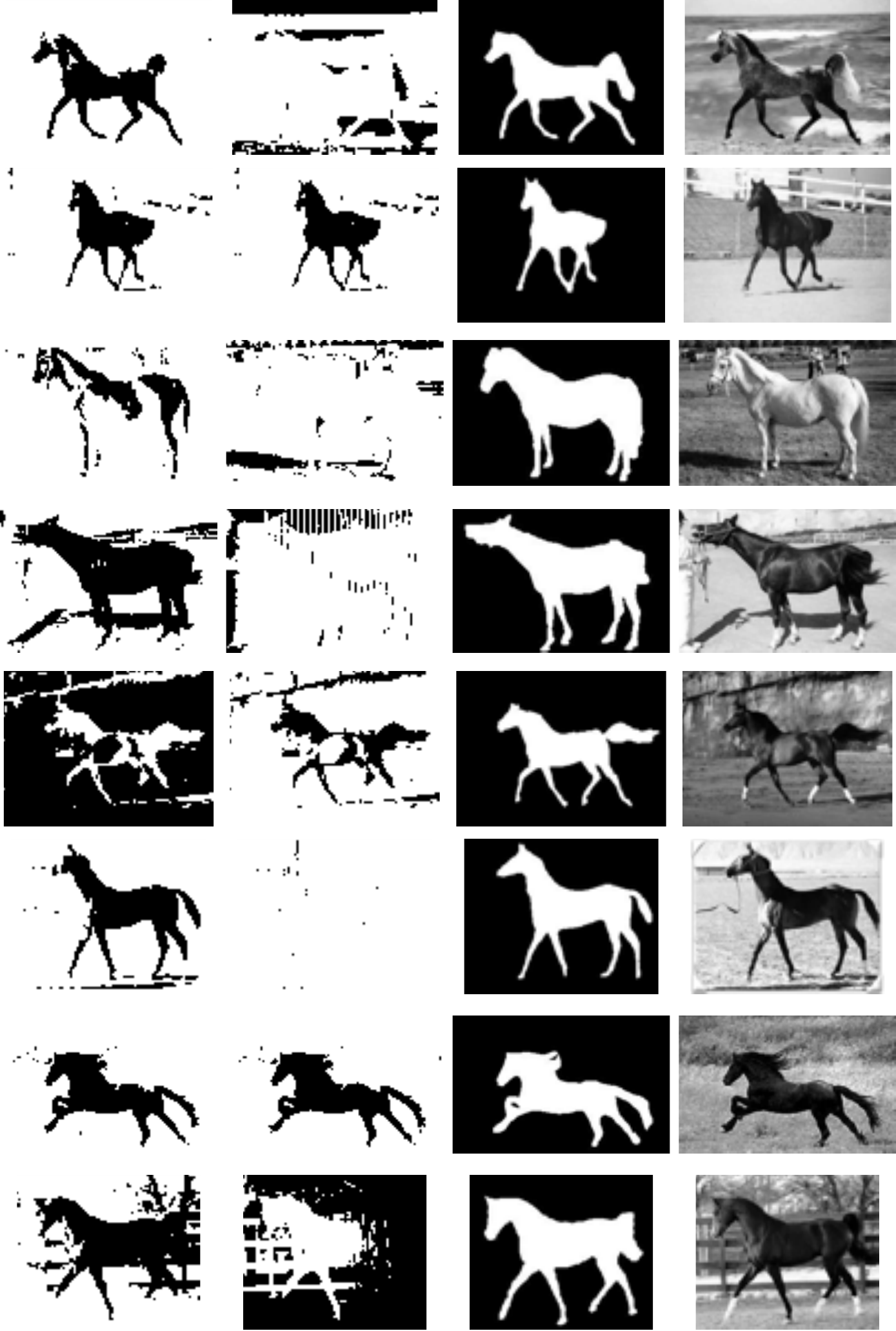}
\caption{\label{chwalbeaux}From left to right: image segmented with our learned metric, image segmented by a parameter adjusted by exhaustive search, groundtruth segmentation, original image in gray.}

\vspace*{-.4cm}

\end{figure}

\section{Conclusion}
We have presented a large-margin framework to learn metrics for unsupervised partitioning problems, with application in particular to change-point detection in video streams and image segmentation, with a significant improvement in partitioning performance. For the applicative part, following recent trends in image segmentation~(see, e.g.,~\cite{JouBacPonc10_cvpr}), it would be interesting to extend our change-point framework so that it allows unsupervised co-segmentation of several videos:  each segment could then be automatically labelled so that segments from different videos but with the same label correspond to the same action.
%

\vspace*{-.25cm}

{\small

\bibliography{Workinprogreetestbis}
}
\bibliographystyle{abbrv}

\renewcommand\thesection{\Alph{section}}

\setcounter{section}{0} 

\section{Asymptotics of the loss between partitions}
Note that in this section, we will denote by $d_F^2$ the ``normalized'' loss between partitions. This means that, with the notations of the article when considering two matrices $M$ and $N$ representing some partitions $P$ and $Q$ in the generic set of partitions $\mathcal{P}$, we have $Td_F^2=\|M-N\|_F^2$. Throughout this section, we will refer to the size of a partition as the number of clusters.
\subsection{Hypothesis}
\begin{itemize}

\item We assume we consider $P$ and $Q$ two partitions of the same size, with a common number of clusters $K$. 
\item $\forall k,l \in \lbrace 1,\ldots,K \rbrace$, we denote $\epsilon_{k \rightarrow l}=\vert P_k \cap Q_l \vert$, the flow which goes out from $P$ to $Q$ when $P$ goes to $Q$.
\item We define the global outer flow as $\epsilon_{k\rightarrow}=\sum_{l \neq k}\epsilon_{k \rightarrow l}$ and the global inner flow as $\epsilon_{\rightarrow l}=\sum_{l \neq k}\epsilon_{k \rightarrow l}$
\end{itemize}
\subsection{Main result}
\begin{theorem}
Let $P$ and $Q$ two partitions satisfying our hypothesis. If we note $M(P,Q)=\max_{k \neq l}\big\lbrace \frac{\epsilon_{k \rightarrow l} }{min(\vert P_k \vert \vert P_l\vert)}\big\rbrace$, then $\exists \delta:\mathcal{P}^2 \rightarrow \mathbb{R}$ such that $\sup_{P,Q, K\times M(P,Q)\leq \epsilon} \vert \delta(P,Q) \vert \rightarrow_{\epsilon \rightarrow 0} 0$ and $\forall P,Q \in \mathcal{P}$ of the same size $K$, $T $, $$T d^2_F(P,Q)=2 \sum^K_{k=1} \big ( \frac{\epsilon_{k \rightarrow}+\epsilon_{\rightarrow k}}{\vert P_k \vert}\big )\times (1+\delta(P,Q))$$
\end{theorem}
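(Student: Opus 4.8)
The plan is to begin from the exact combinatorial formula for the loss already derived in Section~\ref{perte}. Since $P$ and $Q$ share the same number $K$ of clusters, the contingency-table form of Eq.~\eqref{losses} reads
\[
T d_F^2(P,Q) = 2K - 2\sum_{k,l}\frac{\epsilon_{k\rightarrow l}^2}{|P_k|\,|Q_l|},
\qquad \epsilon_{k\rightarrow l}=|P_k\cap Q_l|.
\]
First I would separate the diagonal of the contingency table from the rest and exploit the marginal identities $|P_k|=\epsilon_{k\rightarrow k}+\epsilon_{k\rightarrow}$ and $|Q_k|=\epsilon_{k\rightarrow k}+\epsilon_{\rightarrow k}$, which give the factorization $\epsilon_{k\rightarrow k}^2=(|P_k|-\epsilon_{k\rightarrow})(|Q_k|-\epsilon_{\rightarrow k})$ and hence $\epsilon_{k\rightarrow k}^2/(|P_k||Q_k|)=(1-\epsilon_{k\rightarrow}/|P_k|)(1-\epsilon_{\rightarrow k}/|Q_k|)$. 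Expanding this product and summing over $k$ turns the formula above into the \emph{exact} identity
\[
T d_F^2(P,Q) = 2\sum_k \frac{\epsilon_{k\rightarrow}}{|P_k|} + 2\sum_k\frac{\epsilon_{\rightarrow k}}{|Q_k|} - 2\sum_k \frac{\epsilon_{k\rightarrow}\,\epsilon_{\rightarrow k}}{|P_k||Q_k|} - 2\sum_{k\neq l}\frac{\epsilon_{k\rightarrow l}^2}{|P_k||Q_l|}.
\]
This already displays the claimed leading term $L:=2\sum_k(\epsilon_{k\rightarrow}+\epsilon_{\rightarrow k})/|P_k|$, up to replacing $|Q_k|$ by $|P_k|$ in the second sum and three visibly lower-order corrections; the whole game is then to show these discrepancies are $o(L)$ uniformly.

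Next I would translate the hypothesis $K\,M(P,Q)\le\epsilon$ into uniform smallness of the relative flows $a_k:=\epsilon_{k\rightarrow}/|P_k|$ and $b_k:=\epsilon_{\rightarrow k}/|P_k|$. Using $\epsilon_{k\rightarrow l}\le M(P,Q)\min(|P_k|,|P_l|)\le M(P,Q)|P_k|$ and summing over the $K-1$ indices $l\neq k$ gives $a_k\le (K-1)M(P,Q)\le\epsilon$, and symmetrically $b_k\le\epsilon$; this is exactly the point where the factor $K$ in the hypothesis is consumed. As a by-product, $|Q_k|=|P_k|(1-a_k+b_k)\ge|P_k|/2$ once $\epsilon\le 1/2$, so every denominator containing $|Q_k|$ or $|Q_l|$ may be replaced by the corresponding $P$-size at the cost of a harmless constant.

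With these bounds in hand, I would estimate the three corrections against $L=2\sum_k(a_k+b_k)$. The marginal-mismatch term uses $1/|Q_k|-1/|P_k|=(\epsilon_{k\rightarrow}-\epsilon_{\rightarrow k})/(|P_k||Q_k|)$ and is bounded by $4\sum_k b_k(a_k+b_k)\le2\epsilon L$; the cross term is likewise $\le 4\sum_k a_kb_k\le2\epsilon L$. The last sum is the delicate one: I would write $\epsilon_{k\rightarrow l}^2\le M(P,Q)\min(|P_k|,|P_l|)\,\epsilon_{k\rightarrow l}$ so that, after using $|Q_l|\ge|P_l|/2$ and $\min(|P_k|,|P_l|)\le|P_k|$, one power of $\epsilon_{k\rightarrow l}$ cancels and the remaining sum telescopes into a marginal flow, giving $2\sum_{k\neq l}\epsilon_{k\rightarrow l}^2/(|P_k||Q_l|)\le 4M(P,Q)\sum_l b_l\le 2M(P,Q)L\le2\epsilon L$. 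Adding the three corrections yields $|T d_F^2(P,Q)-L|\le 6\epsilon L$, so defining $\delta(P,Q):=(T d_F^2(P,Q)-L)/L$ when $L\neq0$ (and $\delta:=0$ when $L=0$, in which case all flows vanish, $P=Q$, and the loss is $0$) gives the required $\sup|\delta|\le6\epsilon\to0$.

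The hard part will be the off-diagonal sum. The leading term and the first two corrections are sums over only the $K$ diagonal cells and are controlled transparently by $a_k,b_k\le\epsilon$; but the off-diagonal sum ranges over $K^2-K$ cells, and a crude bound would make it look comparable to $L$ rather than smaller. The decisive trick is to extract a single factor $M(P,Q)$ from the square $\epsilon_{k\rightarrow l}^2$, reducing it to a first-order flow that reassembles into $\sum_l b_l$; tracking the powers of $K$ carefully so that everything is ultimately governed by $\epsilon=K\,M(P,Q)$ is where the bookkeeping must be done with care.
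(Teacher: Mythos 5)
Your proof is correct and follows essentially the same route as the paper's: the same contingency-table formula, the same diagonal/off-diagonal split, the same device of extracting one factor of $M(P,Q)$ from $\epsilon_{k\rightarrow l}^2$ to reduce the off-diagonal sum to a marginal flow, and the same use of $\epsilon_{k\rightarrow}\leq K M(P,Q)|P_k|$ to replace $|Q_k|$ by $|P_k|$ in denominators. Your version is in fact tidier than the paper's (the exact identity from $\epsilon_{k\rightarrow k}^2=(|P_k|-\epsilon_{k\rightarrow})(|Q_k|-\epsilon_{\rightarrow k})$ and the explicit bound $|\delta|\leq 6\epsilon$ make the argument fully quantitative), but it is not a different proof.
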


\begin{proof}
From the expressions of Section \ref{sec:losses}, we can write :
\begin{eqnarray*}
d^2_F(P,Q)&=& 2K-2 \sum_{k,l}\frac{\vert P_k \cap Q_l \vert^2}{\vert Q_l \vert\vert P_k\vert}\\
&=&2\sum^K_{k=1}(1-\frac{\vert P_k \cap Q_k \vert^2}{\vert Q_k \vert\vert P_k\vert})+2 \sum_{k \neq l} \frac{\epsilon_{k \rightarrow l}^2}{\vert P_k \vert(\vert P_l\vert-\epsilon_{\rightarrow}+\epsilon_{\rightarrow k})}
\end{eqnarray*}
The second term can be pretty easily bounded using $M$ $$2 \sum_{k \neq l} \frac{\epsilon_{k \rightarrow l}^2}{\vert P_k \vert(\vert P_l\vert-\epsilon_{k \rightarrow}+\epsilon_{\rightarrow l})} \leq 2M \sum_{k \neq l} \frac{\epsilon_{k\rightarrow l}}{\vert P_l\vert-\epsilon_{k \rightarrow}}. $$
We can go further, noticing that $\epsilon_{k \rightarrow} \leq KM\vert P_k\vert$, which leads eventually to, if $M\leq 1/2K$ (and this is the case if $\delta$ tends to 0 in the sense of the assumption of the theorem): 

$$2 \sum_{k \neq l} \frac{\epsilon_{k \rightarrow l}^2}{\vert P_k \vert(\vert P_l\vert-\epsilon_{\rightarrow}+\epsilon_{\rightarrow k})} \leq 2M \sum_{k \neq l} \frac{\epsilon_{k\rightarrow l}}{\vert P_l\vert-\epsilon_{k \rightarrow}} \leq 4M \sum_{k \neq l}\frac{\epsilon_{k  \rightarrow l}}{P_l}.$$
\\

Now, let's bound the first term, which is a little more long:

\begin{eqnarray*}
1-\frac{\vert P_k \cap Q_k \vert^2}{\vert Q_k \vert\vert P_k\vert}&=&1-\frac{(\vert P_k -\epsilon_{k \rightarrow})^2}{\vert P_k (\vert P_k\vert-\epsilon_{k \rightarrow}+\epsilon_{ \rightarrow k})}\\
&=&\frac{\epsilon_{k \rightarrow}+\epsilon_{ \rightarrow k}}{\vert P_k \vert} \times \big ( \frac{1}{1+\frac{-\epsilon_{k \rightarrow}+\epsilon_{ \rightarrow k}}{\vert P_k \vert}}\big )-\frac{\epsilon_{k \rightarrow}^2}{\vert P_k \vert(\vert P_k \vert-\epsilon_{k \rightarrow}+\epsilon_{ \rightarrow k})}
\end{eqnarray*}

But, for the same reasons as when we bounded the second term
$$\frac{\epsilon_{k \rightarrow}^2}{\vert P_k \vert(\vert P_k \vert-\epsilon_{k \rightarrow}+\epsilon_{ \rightarrow k})}\leq 2 \sum^K_{k=1} \frac{\epsilon^2_{k \rightarrow}}{\vert P_k \vert^2}.$$
Using the fact that $\forall k, (K)M\geq \frac{\epsilon_{k \rightarrow}}{\vert P_k \vert}$, we finally get that, when $M\leq 1/2K$:
$$\frac{\epsilon_{k \rightarrow}^2}{\vert P_k \vert(\vert P_k \vert-\epsilon_{k \rightarrow}+\epsilon_{ \rightarrow k})}\leq 4M\sum^K_{k=1} \frac{\epsilon_{k \rightarrow}}{\vert P_k \vert}.$$

Thus, putting everything together, when $KM \rightarrow 0$, we get the statement of the theorem.
\end{proof}

\section{Equivalence between the loss between partition and the Hausdorff distance for change point detection}

As mentioned in the title of this , there is a deep link between the Hausdorff distance and the distance between partition we used throughout this paper in the case of change-point detection applications.
We propose here to show that the two distances are equivalent.
\subsection{Hypothesis and notations}
\begin{itemize}
\item We consider the segmentations $P$ and $Q$ has having been embedded in $[0,1]$ so that we can consider a distance $d$ on $[0,1]$ to define the Hausdorff distance between the frontiers of the elements of $P$ and $Q$.
\item We denote $l_{m}(P)$ the minimal length of a segment in a partition $P \in \mathcal{P}$ and $l_{ma}$ the maximal one.
\item We denote by $d_h$ the Hausdorff distance between partitions as described in Section \ref{perte}
\end{itemize}
\subsection{Main result}
\begin{theorem}
Let P,Q denote two partitions.
If $\vert P \vert=\vert Q \vert$ and $d_h(P,Q)=\epsilon < \frac{1}{2} l_m(P)$, then we have the following:
$$\frac{\epsilon}{l_{ma}(P)}\leq d^2_F(P,Q) \leq 12 K \frac{\epsilon}{l_m(P)}.$$
Moreover, without assuming $\vert P \vert=\vert Q \vert$, we get
$$d^2_F(P,Q)\geq \frac{\epsilon}{\max(l_{ma}(P),l_m(Q)))}\geq \frac{\epsilon}{T}$$
\end{theorem}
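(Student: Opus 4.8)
The plan is to push everything through the contingency-table form of the loss recalled in Section~\ref{sec:losses}. Since here $|P|=|Q|=K$, I would start from
\[
d_F^2(P,Q) = 2K - 2\sum_{k,l}\frac{|A_k\cap B_l|^2}{|A_k|\,|B_l|},
\]
with the lengths $|A_k|,|B_l|,|A_k\cap B_l|$ measured as in the embedding of Section~\ref{perte}, and immediately rewrite it, using $\sum_k|A_k\cap B_l|=|B_l|$, in the two equivalent nonnegative forms
\[
d_F^2(P,Q)=2\sum_{l}\frac{D_l}{|B_l|}=2\sum_{k}\frac{\widetilde D_k}{|A_k|},\quad
D_l=\sum_k\frac{|A_k\cap B_l|\,(|A_k|-|A_k\cap B_l|)}{|A_k|},
\]
and $\widetilde D_k$ defined symmetrically. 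In the change-point setting every cluster is an interval, so each $|A_k\cap B_l|$ is just the length of an interval intersection; this is what makes these sums computable term by term.

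The geometric heart, which I expect to be the main obstacle, is a structural lemma: because $d_h(P,Q)=\epsilon<\tfrac12 l_m(P)$ every segment has length $>2\epsilon$, so the frontiers of $P$ and $Q$ are in an order-preserving bijection matching each rupture to one at distance $\le\epsilon$ with no crossings. Consequently $A_k\cap B_l=\emptyset$ unless $|k-l|\le 1$, and writing $\delta_k$ for the signed displacement of the $k$-th rupture, the relevant overlap lengths are exactly $\max(0,\pm\delta_k)\le\epsilon$. This is where the hypothesis $\epsilon<\tfrac12 l_m(P)$ is genuinely used, and where endpoint cases near $0$ and $1$ must be checked carefully.

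For the lower bound I would not even need the full bijection. The Hausdorff distance is attained at some rupture, say $p_j$ of $P$, all of whose $Q$-ruptures are at distance $\ge\epsilon$; hence the $Q$-interval $B_l$ containing $p_j$ swallows the whole neighbourhood $(p_j-\epsilon,p_j+\epsilon)$. Since $p_j$ separates $A_j$ from $A_{j+1}$, this forces $w:=|A_{j+1}\cap B_l|\ge\epsilon$ and $|B_l|-w\ge\epsilon$. Keeping only this single term in $\widetilde D_{j+1}$ and using the elementary estimate $\tfrac{x(m-x)}{m}\ge\tfrac12\min(x,m-x)\ge\tfrac\epsilon2$ gives $\widetilde D_{j+1}\ge\tfrac\epsilon2$, hence $d_F^2\ge \epsilon/|A_{j+1}|\ge \epsilon/l_{ma}(P)$. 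The same straddling argument applied to the $P$-segment that contains a far $Q$-rupture handles the case $|P|\ne|Q|$, and the final chain $\epsilon/\max(l_{ma}(P),l_m(Q))\ge\epsilon/T$ is then immediate because any cluster length is at most the length of the whole sequence.

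For the upper bound I would invoke the structural lemma. Estimating the core term ($k=l$) by $|A_l|-|A_l\cap B_l|$ and the two off-diagonal terms ($k=l\pm1$) by $|A_{l\pm1}\cap B_l|$, and dropping the subtracted squares, all contributions telescope into the displacements, giving the clean per-segment bound $D_l\le|\delta_{l-1}|+|\delta_l|\le 2\epsilon$; together with $|B_l|\ge l_m(P)-2\epsilon$ this controls each of the $K$ terms of $d_F^2=2\sum_l D_l/|B_l|$. Splitting into $\epsilon\le\tfrac13 l_m(P)$, where $|B_l|\ge\tfrac13 l_m(P)$ yields $D_l/|B_l|\le 6\epsilon/l_m(P)$, and $\tfrac13 l_m(P)<\epsilon<\tfrac12 l_m(P)$, where each term is trivially $\le1$ while $12\epsilon/l_m(P)>4$, gives $d_F^2\le 12K\,\epsilon/l_m(P)$ in both regimes. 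What remains beyond the structural lemma is then only the bookkeeping of these constants.
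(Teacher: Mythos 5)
Your proposal is correct and follows essentially the same route as the paper's proof: the same contingency-table expression for the loss, the same structural observation that $\epsilon<\tfrac12 l_m(P)$ forces each $Q$-segment to meet only adjacent $P$-segments (yielding the $12K\epsilon/l_m(P)$ upper bound), and the same straddling-rupture argument for the lower bound $\epsilon/l_{ma}(P)$. The differences are only in bookkeeping --- you telescope the off-diagonal overlaps into rupture displacements where the paper drops them and bounds the diagonal terms via $(1-x)^2/(1+x)\ge 1-3x$ --- and you share the paper's own elisions (the case where the Hausdorff supremum is attained on the $Q$ side, and the only-sketched general ``moreover'' inequality).
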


\begin{proof}

First, let's do the majorization part
Using the expressions of Section \ref{sec:losses}, we have to minorate $\sum^K_{k,l =1}\frac{|P_k \cap Q_l|^2}{|P_k||Q_l|}$. Note that the hypothesis of the Hausdorff distane being inferior to the half of the minimal length is just here to say that the $l$-th segment of partition Q can only overlap with $l-1$-th, $l$th and $l+1$-th elements of $P$. Thus :
\begin{eqnarray*}
\sum^K_{k,l =1}\frac{|P_k \cap Q_l|^2}{|P_k||Q_l|}&=&\sum^K_{k,=1}\frac{|P_k \cap Q_k|^2}{|P_k||Q_k|}+\sum^{K-1}_{k,=1}\frac{|P_k \cap Q_{k+1}|^2}{|P_k||Q_{k+1}|}+\sum^{K-1}_{k=0}\frac{|P_k \cap Q_{k-1}|^2}{|P_k||Q_{k-1}|}\\
&\geq & \sum^K_{k=1} \frac{(|P_k|-2\epsilon)^2}{|P_k|+2 \epsilon} \\
& = & \sum^K_{k=1} \frac{((1-2\frac{\epsilon}{|P_k|})^2}{1+2\frac{\epsilon}{|P_k|}} \\
&\geq & K-6 \epsilon \sum^K_{k=1}\frac{1}{|P_k|}\\
&\geq  & K-6\frac{\epsilon K}{l_m(P)}\\
\end{eqnarray*}
which gives us the majorization.
Note that we used the fact that $\forall x \in [0,1], $ the inequality $\frac{(1-x)^2}{1+x}\geq 1-3x$ holds.
\\

For the minoration, note that it is true all the time, but we will just give the proof in the case where the Hausdorff distance is such that $d_h(P,Q) \leq l_m(P)/2$ and where $|P|=|Q|$.

First, let's begin by some general statements :\\
i)By definition $\epsilon=\max\lbrace \max_{\bar{P_i} \in \partial P}\min_{\bar{Q_j} \in \partial Q}d(\bar{P_i},\bar{Q_j})  \max_{\bar{Q_i} \in \partial Q}\min_{\bar{P_j}  \in \partial P}d(\bar{Q_i},\bar{P_j})\rbrace$.\\
ii) If the first term in the max is attained, that means there exists some $(i^*,j^*)$ such that $|\bar{P}_{i^*}-\bar{Q}_{j^*}|=\epsilon$. It also means that, if we look at the sequences, there is no elements of $\partial Q$ is between $\bar{P}_{i^*}$ and $\bar{Q}_{j^*}$. Thus, by definition of the loss $d_F^2(P,Q) \geq 2\sum_{\alpha \in P_j \cap Q_{j^*-1}, \beta \in P_j \setminus Q_{j^*-1}}\frac{1}{P_j^*}^2$, and a short computation leads to $d^2_F(P,Q)\geq 2\frac{\epsilon}{|P_j^*|}(1-\frac{\epsilon}{|P_{j^*}|})_+$.\\
iii) If the second term in the max is attained, the same minoration holds by permuting indices.
\\
Let's go back to our special case, we have $|P^*_i|>2 \epsilon$ and $|Q^*_i|>2 \epsilon$.
This leads to $$d^2_F(P,Q) \geq \max(\frac{\epsilon}{l_{ma}(P)},\frac{\epsilon}{l_{ma}(Q)})$$
\end{proof}
%
\end{document}